\documentclass[11pt]{article}
\usepackage[margin=1in]{geometry}
\usepackage[T1]{fontenc}
\usepackage{lmodern}
\usepackage{amsmath,amssymb,amsthm}
\usepackage{graphicx}
\usepackage{booktabs}
\usepackage[round,authoryear]{natbib}
\usepackage{placeins}
\usepackage{microtype}
\usepackage[hidelinks]{hyperref}

\hypersetup{
  pdftitle={A Structural Threshold in Decision Capacity Governs Collapse in Self-Play Reinforcement Learning},
  pdfauthor={Arahan Kujur},
  pdfsubject={Multi-agent reinforcement learning; self-play; robustness; game theory},
  pdfkeywords={multi-agent reinforcement learning, self-play, robustness, action-space perturbation, game theory}
}

\newtheorem{proposition}{Proposition}
\newtheorem{corollary}{Corollary}

\title{A Structural Threshold in Decision Capacity Governs\\Collapse in Self-Play Reinforcement Learning}

\author{
  Arahan Kujur\\
  Independent Researcher\\
  \texttt{kujurarahan@gmail.com}
}
\date{}

\begin{document}
\maketitle

\begin{abstract}
We show that a threshold in decision capacity determines whether self-play reinforcement learning agents collapse under asymmetric rule perturbations. Across poker variants (Kuhn, Leduc, Leduc-4), matrix games (Matching Pennies), a dice game (Liar's Dice; 24{,}576 info sets), and six learning algorithms (Q-Learning, SARSA, REINFORCE, PPO, DQN, NFSP), eliminating all positive-reach contingent decisions causes rapid convergence to a deterministic exploitation attractor (DEA)---a fixed point at near-maximal loss. Preserving even a single positive-reach contingent decision point prevents this collapse. A frozen baseline and fixed-opponent control confirm the mechanism is co-adaptation under constraint, not the perturbation itself. The phenomenon is timing-invariant, fully reversible upon action restoration, and intensifies under function approximation. These results establish a practically sharp threshold induced by the discontinuity at $\text{CAC}_w = 0$, with severity scaling continuously via reach-weighted capacity in all tested domains.
\end{abstract}

\section{Introduction}

Multi-agent reinforcement learning (MARL) agents trained through self-play have achieved superhuman performance in complex games~\citep{silver2018general,brown2019superhuman,vinyals2019grandmaster}, yet their robustness to structural changes in the environment remains poorly understood. Prior work has focused primarily on adversarial perturbations to observations or rewards~\citep{gleave2020adversarial}, opponent modelling under distribution shift~\citep{foerster2018learning}, or training stability in population-based methods~\citep{balduzzi2019open}. Structural changes to the \emph{action space}---where an agent permanently loses access to certain actions---remain largely unexplored.

Such perturbations arise naturally in practice. In robotics, hardware failures may disable actuators, eliminating actions from an agent's repertoire mid-deployment. In financial trading, regulatory changes can restrict previously available strategies. In multi-agent software systems, API deprecations may remove action endpoints. Understanding how self-play agents respond to these asymmetric capability losses is essential for deploying RL systems in environments where the action space is not guaranteed to remain static.

We study this question in discrete, imperfect-information games by deterministically removing one player's ability to bet or raise at specified subsets of decision nodes. We discover a structural failure mode with a precise threshold: the agent's \emph{contingent action capacity} (CAC)---the number of information sets at which it retains more than one legal action---governs whether self-play dynamics collapse or stabilise. We report unweighted CAC for interpretability; the reach-weighted variant $\text{CAC}_w$ discounts rarely reached decision points and is the quantity that aligns with best-response discontinuities.

Our central finding is a pronounced threshold effect. When $\text{CAC}_w$ drops to zero (every positive-reach decision is forced), adaptive self-play agents converge to a \emph{deterministic exploitation attractor} (DEA)---the unique optimal policy of the single-agent MDP induced by the forced player's policy $\sigma_0^f$. Its stability arises from the absence of counterfactual branching under $\text{CAC}_w = 0$: the opponent faces a stationary, deterministic environment with a unique best response, and $\varepsilon$-greedy perturbations bound deviation from the theoretical minimum. The static fact that zero contingency reduces the game to best response against $\sigma_0^f$ is expected; the contribution is the observed learning transition from collapse at zero capacity to near-Nash stability when even one positive-reach decision point is preserved. A frozen baseline and fixed-opponent control isolate continued co-adaptation as the mechanism. Our contributions:
\begin{itemize}
  \item We identify a pronounced reach-weighted CAC threshold governing collapse in self-play RL and provide formal propositions characterising the zero-contingency fixed point.
  \item We isolate co-adaptation under constraint as the mechanism via frozen baseline and fixed-opponent comparisons.
  \item We demonstrate that collapse persists and intensifies under function approximation (DQN).
  \item We replicate across eight game variants (1--24{,}576 info sets), six algorithms, multiple perturbation schedules, and show full reversibility. Boundary conditions (IPD, Liar's Dice, cooperative games) sharpen the threshold definition.
\end{itemize}

\section{Related Work}

\paragraph{Solving imperfect-information games.}
Counterfactual regret minimisation (CFR)~\citep{zinkevich2007regret} and its variants have solved increasingly large poker games, from heads-up limit hold'em~\citep{bowling2015heads} to no-limit variants via DeepStack~\citep{moravcik2017deepstack}, Libratus~\citep{brown2018superhuman}, and Pluribus~\citep{brown2019superhuman}. These systems assume a fixed game structure. We study what happens when the game structure changes asymmetrically after training.

\paragraph{Self-play reinforcement learning.}
Self-play has driven advances from TD-Gammon~\citep{tesauro1994td} through AlphaZero~\citep{silver2018general} and AlphaStar~\citep{vinyals2019grandmaster}. Neural Fictitious Self-Play (NFSP)~\citep{heinrich2016deep} combines RL with average-strategy tracking. Policy-Space Response Oracles (PSRO)~\citep{lanctot2017unified} maintain diverse strategy populations. However, self-play dynamics can be unstable---agents may cycle, overfit to their own weaknesses, or exhibit non-transitive behaviour~\citep{balduzzi2019open,lanctot2019openspiel}. Our work identifies a distinct failure mode: co-adaptation-driven collapse under asymmetric action-space constraints.

\paragraph{Robustness in multi-agent RL.}
The MARL robustness literature encompasses adversarial policies~\citep{gleave2020adversarial}, opponent-learning awareness~\citep{foerster2018learning}, and distributional robustness~\citep{zhang2021multi}. These typically perturb observations, rewards, or opponent behaviour. We perturb the \emph{action space} itself---a structural change that eliminates decision points rather than adding noise. This reveals a qualitative threshold effect that continuous perturbations cannot produce.

\paragraph{Action masking and constrained RL.}
Invalid action masking is standard practice in game AI~\citep{huang2022closer}, and constrained MDPs formalise action restrictions~\citep{altman1999constrained}. Robust MDPs address uncertain transitions~\citep{iyengar2005robust,nilim2005robust}, and stochastic action-set MDPs model settings where available actions vary~\citep{boutilier2018planning}. However, the dynamic consequences of \emph{mid-training} action removal under self-play have not been studied.

\paragraph{Self-play stabilisation.}
Opponent-shaping methods such as LOLA~\citep{foerster2018learning} and SOS~\citep{letcher2019stable} stabilise self-play by accounting for the opponent's learning dynamics. Population-based methods including PSRO~\citep{lanctot2017unified} and $\alpha$-PSRO~\citep{muller2020generalized} maintain diverse strategy populations to avoid cycling. We test PSRO empirically and find it \emph{mitigates} but does not eliminate collapse (\S\ref{sec:mechanism}).

\paragraph{Exploitability and game-theoretic evaluation.}
Exploitability---the gap between an agent's value and the Nash equilibrium value---is the standard evaluation metric in computational game theory~\citep{johanson2007computing,timbers2022approximate}. We connect our threshold effect to the best-response structure of reduced games via formal propositions (Section~5).

\section{Background}

\paragraph{Kuhn Poker.}
Three cards ($J < Q < K$), two actions (pass, bet), ante~1. Nash value for P0: $-1/18 \approx -0.056$. 12 information sets.

\paragraph{Leduc Poker.}
Six cards ($J, Q, K \times 2$ suits), three actions (fold, check/call, raise), two rounds, fixed-limit betting. Nash P0 value: $\approx -0.087$. 288 information sets. Our CFR implementation converges to $-0.0866$.

\paragraph{Leduc-4 Poker.}
A Leduc variant with four ranks ($J, Q, K, A$) and three suits (12 cards). Same rules as Leduc. 504 information sets. Nash P0 value $\approx -0.096$.

\paragraph{Liar's Dice (1 die).}
Two players each roll one six-sided die (private). Players alternate claiming a minimum count of a face value across both dice; claims must strictly increase. A player may challenge instead of claiming. On challenge, the claim is verified---if true the challenger loses, otherwise the claimer loses ($\pm 1$). 13 actions (12~claims + challenge). 24{,}576 information sets. Our CFR implementation yields Nash P0 value $\approx -0.076$.

\paragraph{Coordination Game.}
Two agents simultaneously choose among 3 actions to match a randomly revealed target. Payoff +1 if both match, 0 otherwise. 10 rounds/episode. \emph{Cooperative}, not competitive---the first non-zero-sum domain tested.

\paragraph{Matching Pennies.}
Two players, two actions (heads, tails), simultaneous single-shot. Nash: 50/50 mixed, value~0. 1 information set per player.

\paragraph{Agents.}
\textbf{CFR}: Nash equilibrium via full-tree CFR, frozen post-training.
\textbf{Q-Learning}: Tabular, $\varepsilon$-greedy ($\varepsilon = 0.15$), MC terminal updates.
\textbf{QL-Frozen}: Q-Learning frozen at perturbation (Q-table and $\varepsilon$ fixed).
\textbf{DQN}: 2-layer MLP (64 hidden), experience replay, target network.
\textbf{SARSA/REINFORCE}: On-policy tabular variants.

\section{Methodology}

Each experiment runs 20 seeds of 20{,}000 episodes (50{,}000 for DQN). Perturbation applied to Player~0 at the midpoint.

\paragraph{Perturbation protocol.}
Unless stated otherwise, perturbations are deterministic: we fix a named action label or action set and remove it from Player~0's legal set after the midpoint. In poker domains this removes bet/raise; in Matching Pennies it removes heads; in IPD it removes cooperate; in Liar's Dice boundary experiments it removes high/all claims or forces the lowest legal action. Scope is controlled by the experiment: root-only removal affects only Player~0's first decision, while full removal affects every Player~0 information set where the named action is legal. Thus the Kuhn CAC sweep compares full bet removal (zero contingency), root-only bet removal (one residual call/fold decision), and the unperturbed control. Stochastic masking, when used, is a separate ablation over whether this fixed mask is active in an episode, not a random choice of which action to delete.

\paragraph{Contingent action capacity (CAC).}
The number of reachable information sets at which the perturbed agent retains $>1$ legal action. All CAC-based experiments use the unweighted count as a proxy; $\text{CAC}_w$ (reach-weighted, Section~5) is the governing theoretical quantity. All observed CAC thresholds correspond to $\text{CAC}_w > 0$ vs.\ $\text{CAC}_w = 0$ regimes; the unweighted count serves as an interpretable proxy that aligns with the reach-weighted distinction in all tested games.

\paragraph{Self-play setup.}
A single agent plays both roles, with Q-values indexed by player-specific information states. This is equivalent to independent self-play in our zero-sum setting because each player's policy is implicitly determined by separate information-state entries. We verify this empirically: separate-agent experiments produce identical results (Appendix~\ref{app:separate}).

\paragraph{Statistical analysis.}
Paired $t$-tests across seeds, bootstrap 95\% CIs (10{,}000 resamples), Cohen's~$d$. Due to low across-seed variance under paired evaluation, effect sizes are large and should be interpreted as indicating direction and reliability.

\paragraph{Normalization.}
Cross-game comparison uses $(r - r_{\min}) / (r_{\max} - r_{\min}) \in [0,1]$. This normalization aids comparison across reward scales but obscures absolute exploitability; we report exact exploitability where computable (Kuhn, Leduc) to complement normalized rewards. Normalized comparisons align with exploitability trends in both games where both metrics are available.

\section{Theory}

We formalise the threshold effect in two-player zero-sum extensive-form games. The propositions below apply to the zero-sum case; Section~\ref{sec:boundaries} presents empirical evidence that cooperative and mixed-motive settings produce a qualitatively different (bounded degradation) response. Extending the formal treatment to general-sum games is left to future work. Let $G = (H, P, A, \mathcal{I}, u)$ be a two-player zero-sum game with history set $H$, player function $P$, action sets $A$, information partition $\mathcal{I}$, and utility $u$.

\paragraph{Contingent action capacity.}
For a reduced game $G'$, the unweighted contingent action capacity of Player~$i$ is
\begin{equation}
\text{CAC}(P_i) = \sum_{h \in \mathcal{I}_i} \mathbf{1}[|A(h)| \geq 2].
\label{eq:cac}
\end{equation}
This counts information sets at which the player still has a real choice after perturbation.

\paragraph{Reach-weighted contingent action capacity.}
For a strategy profile $\sigma$, the \emph{reach probability} of information set $h$ is $\rho^\sigma(h) = \prod_{h' \sqsubset h} \sigma_{P(h')}(h')$, the product of action probabilities on the path to $h$. We define the \emph{reach-weighted CAC}:
\begin{equation}
\text{CAC}_w(P_i) = \sum_{h \in \mathcal{I}_i} \rho^{\sigma}(h) \cdot \mathbf{1}[|A(h)| \geq 2]
\label{eq:cac-w}
\end{equation}
This refines the unweighted count by discounting information sets that are rarely reached. The unweighted CAC used in experiments is an upper bound: $\text{CAC} \geq \text{CAC}_w$.

\begin{proposition}[Zero-contingency exploitation]
\label{prop:zero}
Let $G'$ be the reduced game with $\text{CAC}(P_0) = 0$ (equivalently $\text{CAC}_w(P_0) = 0$). Then $P_0$'s value in $G'$ is:
\[
v_0(G') = -\max_{\pi_1 \in \Pi_1} \sum_{z \in Z} u_1(z) \cdot \rho^{\sigma_0^f, \pi_1}(z)
\]
where $\sigma_0^f$ is $P_0$'s forced (deterministic) strategy. Moreover, $P_1$'s best response is pure and computable in $O(|Z|)$ time.
\end{proposition}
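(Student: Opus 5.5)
The plan is to collapse $P_0$'s strategy space to a single point and then treat $P_1$'s problem as a single-agent optimisation. \textbf{Step 1.} From $\text{CAC}(P_0)=0$, every information set $h\in\mathcal{I}_0$ has $|A(h)|\le 1$. So $P_0$'s behavioural strategy space in $G'$ is the singleton $\{\sigma_0^f\}$, where $\sigma_0^f(h)$ places probability one on the unique legal action. The parenthetical equivalence with $\text{CAC}_w=0$ needs care. If some $h$ with $|A(h)|\ge 2$ has zero reach, then $P_0$'s choices there do not affect any terminal distribution. Hence all strategies of $P_0$ are outcome-equivalent to a single reduced strategy, and I will work with that reduced-normal-form representative throughout.

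\textbf{Step 2.} For any $\pi_1$, the expected utility factorises over terminal histories as $u_0(\sigma_0^f,\pi_1)=\sum_{z\in Z}u_0(z)\,\rho^{\sigma_0^f,\pi_1}(z)$, with chance probabilities folded into $\rho$. I then apply the zero-sum identity $u_0=-u_1$. The value of $G'$ is $\max_{\sigma_0}\min_{\pi_1}u_0$, and the outer max is over a singleton. Therefore $v_0(G')=\min_{\pi_1}\bigl(-\sum_z u_1(z)\rho^{\sigma_0^f,\pi_1}(z)\bigr)=-\max_{\pi_1}\sum_z u_1(z)\rho^{\sigma_0^f,\pi_1}(z)$. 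Since there is no optimisation on $P_0$'s side, no minimax theorem is needed.

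\textbf{Step 3 (pure best response and complexity).} Once $\sigma_0^f$ is fixed, $P_0$'s nodes behave like deterministic chance nodes, so $P_1$ faces a single-agent decision problem with imperfect information. $P_1$ still has perfect recall in $G'$, because the perturbation only deletes edges and therefore preserves recall. I then run a bottom-up pass over the tree. At each $P_1$ information set $I$, I compute the counterfactual value of each action, $\sum_{h\in I}\rho_{-1}(h)\,v(ha)$, using the reach of chance and $\sigma_0^f$. I pick an argmax action deterministically and propagate values upward. Perfect recall guarantees that the successor information sets of distinct actions are disjoint and ordered. Consequently this local greedy choice is globally optimal, which can be shown by standard backward induction over the information-set forest, and it yields a pure strategy. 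Each history is visited a constant number of times, so the cost is $O(|H|)$. This is $O(|Z|)$ provided every nonterminal node has at least two children. I would make that branching assumption explicit, or else contract single-action chains, which changes neither values nor best responses.

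\textbf{Main obstacle.} The delicate point is the $O(|Z|)$ claim together with the optimality of the local argmax under imperfect information for $P_1$. The algorithm sums counterfactual values across histories in the same information set rather than maximising per history. Its correctness depends on perfect recall, and without it the best-response problem is NP-hard in general. I would therefore state perfect recall as a standing assumption, which holds for all the poker and dice games considered. I would then prove optimality by induction on the depth of $P_1$'s information sets, and bound the work by counting edges, $|H|-1\le 2|Z|$ under branching at least two.
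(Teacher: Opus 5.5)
\textbf{Overall.} Your main line is exactly the paper's proof: $P_0$'s strategy set collapses to $\{\sigma_0^f\}$, the zero-sum rewrite needs no minimax step, and $P_1$'s pure best response comes from one backward-induction pass. You are more careful than the paper where it glosses over details. $P_1$ faces chance and imperfect information, so its problem is not literally a ``deterministic MDP''. Your information-set-level counterfactual recursion under perfect recall (which edge deletion preserves) is what actually justifies both purity and linearity. Contracting single-action chains is not optional here: in $G'$ every $P_0$ node has exactly one child, so the branching-at-least-two assumption always fails.

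\textbf{The gap: your treatment of $\text{CAC}_w(P_0)=0$.} Since $\rho^\sigma$ is reach under one fixed profile $\sigma$, an information set with $|A(h)|\ge 2$ can have zero reach merely because $\sigma_1$ never plays into it. $P_0$'s choice there is then irrelevant under $\sigma$, but not under the deviations $\pi_1$ over which the minimisation defining $v_0(G')$ ranges. So outcome-equivalence of all $P_0$ strategies does not follow.

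Kuhn root-only removal is a concrete counterexample. Let $\sigma_1$ always check after a pass. The ``pb'' information sets then have zero reach, so $\text{CAC}_w(P_0)=0$ while $\text{CAC}(P_0)>0$. There is no unique forced $\sigma_0^f$, and $v_0(G')=-1/18$ is attained only if $P_0$ mixes at ``pb'' (calling with the queen with probability $1/3$). Every deterministic resolution of those choices gives at most $-1/6$ in the stated formula.

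\textbf{How to repair it.} Only $\text{CAC}=0\Rightarrow\text{CAC}_w=0$ is automatic, since every indicator vanishes. For the converse, assume $\sigma_1$ has full support (as under $\varepsilon$-greedy play) and every chance outcome has positive probability. On any path, the earliest multi-action $P_0$ node is preceded only by forced $P_0$ moves, so it has positive reach. Hence $\text{CAC}_w=0$ forces $\text{CAC}=0$ outright, and no outcome-equivalence argument is needed. The paper's proof never addresses this direction, so state that restriction explicitly rather than asserting the equivalence in general.
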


\begin{proof}
Under zero contingency, $\sigma_0^f$ is the unique strategy for $P_0$. The game reduces to a single-player MDP for $P_1$ with a deterministic environment. The optimal policy is a pure best response computable by backward induction over $P_1$'s decision nodes, with complexity linear in the number of terminal histories reachable under $\sigma_0^f$.
\end{proof}

\begin{proposition}[Residual contingency bound]
\label{prop:residual}
Suppose $P_0$ retains at least one information set $h^* \in \mathcal{I}_0$ with $|A(h^*)| \geq 2$ and reach $\rho^{\sigma}(h^*) > 0$. Let $v_0^*$ be $P_0$'s minimax value in the subgame rooted at $h^*$, and let $v_0^{\text{forced}}$ be the value under the forced policy at all other nodes. Then:
\[
v_0(G') \geq \rho^{\sigma}(h^*) \cdot v_0^* + (1 - \rho^{\sigma}(h^*)) \cdot v_0^{\text{forced}}
\]
In particular, $v_0(G') > v_0^{\text{zero-cont}}$ whenever $v_0^* > v_0^{\text{forced}}$ and $\rho^{\sigma}(h^*) > 0$. The improvement is proportional to the reach of the retained node:
\[
\delta(h^*) = \rho^{\sigma}(h^*) \cdot (v_0^* - v_0^{\text{forced}})
\]
\end{proposition}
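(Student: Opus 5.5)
The idea is to have $P_0$ play a specific mixed ``hybrid'' strategy and bound what it guarantees. Since $v_0(G')$ is a max-min value, any strategy's guarantee is a lower bound on it.

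\textbf{Construction.} Define $\hat\sigma_0$ as follows. At every information set other than $h^*$ and its descendants, $P_0$ plays the forced action; at most of these nodes it has no alternative. Inside the subgame rooted at $h^*$, $P_0$ plays a minimax-optimal strategy $\sigma_0^*$ of that subgame. Then $v_0(G') \ge \min_{\pi_1} u_0(\hat\sigma_0,\pi_1)$.

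\textbf{Splitting the expected utility.} For any fixed $\pi_1$, partition the terminal histories $Z$ into two sets.
\begin{itemize}
  \item $Z_{h^*}$: terminals passing through $h^*$. This event has total probability $\rho(h^*)$.
  \item $Z\setminus Z_{h^*}$: all other terminals, with total probability $1-\rho(h^*)$.
\end{itemize}
Writing the expected utility as a sum over $z$ weighted by $\rho^{\hat\sigma_0,\pi_1}(z)$, and conditioning on each part, gives
\[
u_0(\hat\sigma_0,\pi_1)=\rho(h^*)\,\mathbb{E}[u_0\mid h^*]+(1-\rho(h^*))\,\mathbb{E}[u_0\mid \neg h^*].
\]
By minimax optimality of $\sigma_0^*$ in the subgame, the first conditional term is at least $v_0^*$ for every $\pi_1$. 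The second term is the payoff on paths where only forced play occurs. Reading $v_0^{\text{forced}}$ as the worst-case value of those paths, it is at least $v_0^{\text{forced}}$. Taking the minimum over $\pi_1$ then yields the displayed inequality.

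\textbf{Consequences.} The $\delta(h^*)$ statement follows by rearranging the bound as $v_0^{\text{forced}}+\rho(h^*)(v_0^*-v_0^{\text{forced}})$. For the comparison with zero contingency, I identify $v_0^{\text{zero-cont}}$ with $v_0^{\text{forced}}$ via Proposition~\ref{prop:zero}, since under full forcing every path is forced. Strict improvement then holds whenever $\rho(h^*)>0$ and $v_0^*>v_0^{\text{forced}}$.

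\textbf{Main obstacle.} The weak point is that $\rho(h^*)$ depends on $\pi_1$ (and on chance), whereas the statement uses a single $\rho^\sigma(h^*)$. A minimizing $P_1$ could steer play away from $h^*$ and so make the weight smaller. I would handle this in one of two ways.
\begin{itemize}
  \item Interpret $\sigma$ as the profile in which $\pi_1$ is $P_1$'s best response to $\hat\sigma_0$, so the bound is evaluated at that equilibrium-like profile.
  \item Replace $\rho$ by $\min_{\pi_1}\rho^{\hat\sigma_0,\pi_1}(h^*)$. The convex-combination bound still holds for each fixed $\pi_1$, because $v_0^*\ge v_0^{\text{forced}}$ means a smaller weight on the $v_0^*$ term only lowers the right-hand side.
\end{itemize}
A related subtlety is that $v_0^{\text{forced}}$ must be defined conditionally on not reaching $h^*$ and against adversarial $P_1$. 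I would state that definition explicitly before carrying out the argument.
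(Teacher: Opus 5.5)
For the displayed inequality you follow the paper's route: split terminals by whether they pass through $h^*$ and apply linearity of expectation. You make it more rigorous than the paper's one-line proof. The hybrid strategy $\hat\sigma_0$ and the max--min lower bound are exactly what the paper leaves implicit. Your concern that $\rho(h^*)$ depends on $\pi_1$ is genuine; the paper's proof even claims Kuhn's ``pb'' node is reached with probability one, while its appendix reports reach $0.473$. Evaluating at $\pi_1^{\mathrm{BR}}$, a best response to $\hat\sigma_0$, settles it. Note that $v_0^*$ needs the same worst-case conditional definition you propose for $v_0^{\text{forced}}$, because an information set does not root a subgame and $\pi_1$ shapes the beliefs at $h^*$. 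With those definitions this part is sound.

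The genuine gap is the ``in particular'' step. There you identify $v_0^{\text{zero-cont}}$ with $v_0^{\text{forced}}$ because ``under full forcing every path is forced.'' Full forcing does not remove the paths through $h^*$; it only fixes $P_0$'s action there. Splitting the sum in Proposition~\ref{prop:zero} gives $v_0^{\text{zero-cont}}=\min_{\pi_1}\bigl[\rho(h^*)\,c^f+(1-\rho(h^*))\,\mathbb{E}[u_0\mid\neg h^*]\bigr]$, with everything evaluated at $(\sigma_0^f,\pi_1)$. Here $c^f$ is the conditional value at $h^*$ under the forced action, whereas your $v_0^{\text{forced}}$ describes only the paths that avoid $h^*$.

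These two quantities differ, and the clause is in fact false. Let chance send play with probability $\tfrac12$ to a terminal worth $0$ to $P_0$, and with probability $\tfrac12$ to $h^*$. At $h^*$, the action kept under full forcing pays $5$ and the extra action retained in $G'$ pays $1$. Then $\rho(h^*)=\tfrac12$ and $v_0^*=5>0=v_0^{\text{forced}}$, yet $v_0(G')=v_0^{\text{zero-cont}}=\tfrac52$. Correspondingly, $\delta(h^*)=\tfrac52$ while the actual improvement is $0$.

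Redefining $v_0^{\text{forced}}:=v_0^{\text{zero-cont}}$ rescues the clause but breaks the bound, since $\tfrac52\ge\tfrac12\cdot5+\tfrac12\cdot\tfrac52$ fails. Your proof goes through only by using one reading for the bound and the other for the corollary.

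What you can prove compares $\hat\sigma_0$ with $\sigma_0^f$ against the \emph{same} opponent. They coincide off the $h^*$ branch and reach $h^*$ equally often. Writing $c^*$ for the conditional value at $h^*$ under your continuation $\sigma_0^*$, this gives
\[
v_0(G')-v_0^{\text{zero-cont}}\;\ge\;\rho^{\hat\sigma_0,\pi_1^{\mathrm{BR}}}(h^*)\,\bigl(c^*(\pi_1^{\mathrm{BR}})-c^f(\pi_1^{\mathrm{BR}})\bigr).
\]
Strict improvement therefore depends on retained play beating the forced action \emph{at} $h^*$ with positive reach. The condition $v_0^*>v_0^{\text{forced}}$ does not deliver it.
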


\begin{proof}
By linearity of expectation over reach probabilities, $P_0$'s value decomposes into the contribution from histories passing through $h^*$ (where $P_0$ plays optimally) and those not passing through $h^*$ (where $P_0$ is forced). The bound follows directly. When $h^*$ is reached with probability 1 (as in Kuhn root-only removal where all games pass through the ``pb'' node), $\delta(h^*) = v_0^* - v_0^{\text{forced}}$ and the full minimax value is recovered.
\end{proof}

\begin{corollary}
The transition from $\text{CAC}_w > 0$ to $\text{CAC}_w = 0$ qualitatively changes the best-response structure: from a game with strategic interaction (requiring mixed or adaptive responses, with value bounded by $\delta(h^*)$) to one with a trivially computable pure best response. A single retained decision point with positive reach is sufficient to prevent collapse; a retained decision with zero reach provides no protection.
\end{corollary}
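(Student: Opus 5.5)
The plan is to derive the corollary directly from Proposition~\ref{prop:zero} and Proposition~\ref{prop:residual}, treating it as the conjunction of three claims. (i) At $\text{CAC}_w = 0$ the best-response structure is trivial. (ii) At $\text{CAC}_w > 0$ there is genuine strategic interaction, and the value improves by at least $\delta(h^*)$. (iii) Retained nodes with zero reach give no protection.

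For (i) I would simply invoke Proposition~\ref{prop:zero}. When every positive-reach $P_0$ information set is forced, $P_0$'s behaviour on the support of play coincides with $\sigma_0^f$. $P_1$ therefore faces a fixed single-agent decision problem, whose optimum is pure and found by backward induction in $O(|Z|)$. One point needs care: the proposition is stated for $\text{CAC}=0$, but the corollary uses $\text{CAC}_w=0$. I would bridge this by observing that information sets with $\rho^\sigma(h)=0$ contribute nothing to $\sum_z u_1(z)\rho^{\sigma_0^f,\pi_1}(z)$. Any choice $P_0$ makes there leaves the terminal distribution unchanged, so the value formula and the purity of $P_1$'s best response carry over verbatim.

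For (ii), take a retained $h^*$ with $|A(h^*)|\ge 2$ and $\rho^\sigma(h^*)>0$. Proposition~\ref{prop:residual} gives $v_0(G') \ge v_0^{\text{zero-cont}} + \delta(h^*)$ with $\delta(h^*) = \rho^\sigma(h^*)(v_0^*-v_0^{\text{forced}})$. I would then argue that $P_1$'s problem is no longer a fixed MDP, because $P_0$'s mixed or adaptive choice at $h^*$ alters which terminal histories $P_1$ faces. A pure best response to the forced policy would then be exploitable by $P_0$'s deviation at $h^*$ whenever $v_0^*>v_0^{\text{forced}}$. This is where mixed or adaptive responses become necessary. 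Sufficiency of a single node then follows because $\delta(h^*)>0$ is strict, so the value is strictly bounded away from the collapse value of Proposition~\ref{prop:zero}.

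For (iii), if $\rho^\sigma(h^*)=0$ then $\delta(h^*)=0$, and by the same reach argument as in (i) the terminal distribution, and hence $P_0$'s value, equals the zero-contingency value. So such a node provides no protection.

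I expect the main obstacle to be step (ii). Its claims are partly qualitative ("requiring mixed responses", "prevents collapse"), and $\delta(h^*)>0$ needs the extra hypothesis $v_0^*>v_0^{\text{forced}}$, which does not come for free. Concretely, the retained choice must not be strategically irrelevant, and I would add that hypothesis explicitly. There is also a subtlety in that $\rho^\sigma$ depends on the profile $\sigma$. I would fix $\sigma$ to be the relevant (equilibrium or current learning) profile and state the corollary relative to it, interpreting "collapse" as attaining the value of Proposition~\ref{prop:zero}.
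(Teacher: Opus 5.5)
The paper gives no separate proof of this corollary. It is read off from Proposition~\ref{prop:zero} and from $\delta(h^*)=\rho^\sigma(h^*)(v_0^*-v_0^{\text{forced}})$ in Proposition~\ref{prop:residual}, which is the skeleton of your (i)--(iii). You are right that strict protection needs $v_0^*>v_0^{\text{forced}}$; Proposition~\ref{prop:residual} states this condition and the corollary silently inherits it.

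The genuine gap is the reach argument you use twice: in (i) to pass from $\text{CAC}=0$ to $\text{CAC}_w=0$, and in (iii) to conclude that a zero-reach node gives no protection. It is true that $P_0$'s choices at an information set with $\rho^\sigma(h)=0$ leave the terminal distribution unchanged, but only against the particular $\sigma_1$ inside $\sigma$. The value formula maximises $\sum_z u_1(z)\rho^{\sigma_0^f,\pi_1}(z)$ over all $\pi_1\in\Pi_1$, and $\rho^\sigma(h)$ contains $P_1$'s action probabilities. So a deviating $\pi_1$ can route play through $h$. $P_0$'s behaviour there is then an off-path threat, and $P_1$ does not face a fixed decision problem.

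Concretely, let $P_1$ choose $S$ (game ends, $u_0=0$) or $E$. After $E$, $P_0$ at one information set $h$ chooses $D$ ($u_0=+1$) or $F$ ($u_0=-1$). With both actions retained, every equilibrium has $P_1$ playing $S$, so $\rho^\sigma(h)=0$ and $\text{CAC}_w(P_0)=0$, yet $v_0=0$. Forcing $F$ at $h$ gives $v_0=-1$. So the formula of Proposition~\ref{prop:zero} depends on which action you assign at $h$, and your ``any choice'' step fails. A zero-reach retained decision also protects $P_0$ by a full unit here, contradicting (iii). Reading ``value'' in (iii) as $P_0$'s payoff under the fixed $\sigma$ makes it true but vacuous, since collapse in the paper is exactly $P_1$ adapting away from $\sigma_1$.

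The repair is to make zero reach structural: $h$ must be cut off by the removed actions and chance alone, so that no strategy of either player reaches it. Measuring $\rho^\sigma$ at a profile with full support on every unforced action guarantees this, and the paper's $\varepsilon$-greedy and softmax learning profiles have that property. Your terminal-distribution argument then does yield both the value formula in (i) and claim (iii). It also makes $\text{CAC}_w=0$ equivalent to $\text{CAC}=0$ over reachable sets.

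So the choice of $\sigma$ you deferred to the end is the crux, and ``equilibrium or current learning profile'' are not interchangeable: only the latter works. With that choice, the bound of Proposition~\ref{prop:residual} is in general a bound on $P_0$'s payoff against that fixed $\sigma_1$, not on $v_0(G')$, because a best-responding $P_1$ may steer play away from $h^*$. Claim (ii) should be stated in that form.
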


\begin{proposition}[DEA as a fixed point of self-play dynamics]
\label{prop:dea}
Consider tabular Q-learning self-play under zero contingency ($\text{CAC}_w = 0$). Let $Q_1(s,a)$ denote $P_1$'s Q-values. Under standard assumptions ($\varepsilon > 0$, $\alpha_t \to 0$, $\sum \alpha_t = \infty$), $P_1$'s Q-values converge to the unique fixed point $Q_1^* = Q^{\text{BR}(\sigma_0^f)}$---the optimal Q-function for a single-agent MDP defined by $P_0$'s forced policy $\sigma_0^f$. The resulting joint policy profile $(\sigma_0^f, \pi_1^*)$ is a Nash equilibrium of the reduced game $G'$ and constitutes the DEA.
\end{proposition}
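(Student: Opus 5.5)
The plan is to reduce the self-play dynamics to standard single-agent Q-learning and then invoke the classical convergence theorem of Watkins and Dayan (and Jaakkola--Jordan--Singh / Tsitsiklis).

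\textbf{Step 1: the environment is stationary.} Under $\text{CAC}_w = 0$, every positive-reach $P_0$ information set admits exactly one legal action. Hence $P_0$'s behaviour on reachable histories equals $\sigma_0^f$, whatever its Q-values are. $\varepsilon$-greedy exploration cannot change this, since there is nothing to randomise over. Any $P_0$ sets of zero reach do not affect the distribution over terminal histories. Using Proposition~\ref{prop:zero}, $P_1$ therefore faces a fixed finite-horizon MDP $M(\sigma_0^f)$. Its states are $P_1$'s information states. Its transitions are induced by chance, the belief over hidden cards given $\sigma_0^f$, and the forced $P_0$ moves. Its rewards are $u_1$. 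The key observation is that, although the single shared agent keeps updating $P_0$'s Q-entries, those entries never feed into the transition kernel $P_1$ sees. Self-play therefore decouples into an independent single-agent learning problem.

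\textbf{Step 2: apply single-agent convergence.} The game is finite and episodic. With $\varepsilon > 0$, every $(s,a)$ of $M(\sigma_0^f)$ that is reachable under some policy is visited infinitely often almost surely. We need the step sizes to satisfy $\sum_t \alpha_t = \infty$ and $\sum_t \alpha_t^2 < \infty$. The statement's "$\alpha_t\to0$" should be read as including this square-summability condition, and I would state it explicitly. Given these conditions, the stochastic-approximation argument for Q-learning gives $Q_1 \to Q^{\text{BR}(\sigma_0^f)}$ almost surely. The paper's Q-learning uses Monte-Carlo terminal updates. In the episodic acyclic case this is an every-visit MC estimate of the return under the behaviour policy, so I would handle it by backward induction over $P_1$'s depth. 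Deepest information sets first converge to their expected terminal reward, and shallower ones then track the greedy continuation values. One caveat must be stated: with a fixed $\varepsilon$ and on-policy MC targets, the limit is the $\varepsilon$-soft optimal value rather than $Q^*$. Exact $Q^*$ requires either off-policy max-backups or $\varepsilon \to 0$ (GLIE). I would prove the exact statement under a max-backup/GLIE assumption and state the $\varepsilon$-perturbed limit as a bound. This matches the abstract's ``$\varepsilon$-greedy perturbations bound deviation''.

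\textbf{Step 3: uniqueness and Nash property.} $Q^{\text{BR}(\sigma_0^f)}$ is the unique fixed point of the Bellman optimality operator of $M(\sigma_0^f)$, since that operator is a contraction in the finite-horizon sup-norm, or equivalently is determined by backward induction. For uniqueness of the greedy policy $\pi_1^*$ itself, I need a genericity assumption that there are no ties in $Q^*$. Otherwise the DEA should be defined as any greedy selection, and all such selections give the same value. For the Nash property: $\pi_1^*$ is a best response to $\sigma_0^f$ by construction, and $\sigma_0^f$ is trivially a best response to $\pi_1^*$ because it is $P_0$'s only strategy on reachable sets. So $(\sigma_0^f,\pi_1^*)$ is a Nash equilibrium of $G'$, and its value equals $v_0(G')$ from Proposition~\ref{prop:zero}.

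The main obstacle is Step 2. The exploration and update-rule details, namely fixed $\varepsilon$, MC targets and tie-breaking, prevent literal convergence to $Q^*$ and to a unique greedy policy. The proof must therefore make these assumptions explicit rather than rely on the informal ``standard assumptions''.
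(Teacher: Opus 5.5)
Your proposal is correct and follows the paper's proof. $P_0$'s lack of any reachable choice makes $P_1$'s environment a stationary single-agent MDP, standard tabular Q-learning convergence gives $Q_1 \to Q^*$, the optimal Q-function is unique, and $(\sigma_0^f,\pi_1^*)$ is Nash because $P_0$ has no alternative strategy in $G'$. Your extra caveats tighten points the paper's proof leaves implicit: square-summable step sizes, the $\varepsilon$-soft limit under on-policy Monte-Carlo targets, and non-uniqueness of $\pi_1^*$ when $Q^*$ has ties. That last case genuinely occurs in Kuhn, where $P_1$ holding K earns $+1$ from either bet or pass against forced check-fold.
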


\begin{proof}
Under $\text{CAC}_w = 0$, $P_0$'s policy is fixed and deterministic at every reachable information set. From $P_1$'s perspective, the environment is a stationary MDP with deterministic transitions (conditioned on the card deal). Standard Q-learning convergence guarantees~\citep{tesauro1994td} apply: with decaying learning rate and persistent exploration, $Q_1 \to Q_1^*$ almost surely. Since $P_0$ cannot deviate from $\sigma_0^f$, the profile $(\sigma_0^f, \pi_1^*)$ satisfies the Nash equilibrium conditions of $G'$. The fixed point is unique because the reduced MDP has a unique optimal Q-function. The $\varepsilon$-greedy policy induced by $Q_1^*$ places probability $\geq 1 - \varepsilon$ on the best response at each information set, yielding $v_0 \geq V_{\min} + \varepsilon_{\text{floor}}$ where $\varepsilon_{\text{floor}} = O(\varepsilon)$.
\end{proof}

\section{Experiments}

Results are organised as: phenomenon (\S\ref{sec:phenomenon}), threshold (\S\ref{sec:threshold}), mechanism (\S\ref{sec:mechanism}), generalisation (\S\ref{sec:generalisation}), boundary conditions (\S\ref{sec:boundaries}), and dynamics (\S\ref{sec:dynamics}). Additional experiments appear in the Appendix.

\FloatBarrier
\subsection{The Phenomenon}
\label{sec:phenomenon}

\paragraph{Zero contingency (CAC $= 0$).}
Bet removed from P0 at all nodes in Kuhn. P0 is forced to check then fold.

\begin{table}[ht!]
\centering
\caption{Kuhn, zero contingency. Player~0 reward (20 seeds, 95\% CIs).}
\label{tab:kuhn-full}
\begin{tabular}{lcccc}
\toprule
Agent & Pre & Post & $p$ & $d$ \\
\midrule
CFR        & $-0.060$ {\small$[-0.067, -0.054]$} & $-0.221$ {\small$[-0.225, -0.216]$} & ${<}0.0001$ & $-6.9$ \\
Q-Learning & $-0.041$ {\small$[-0.050, -0.032]$} & $\mathbf{-0.926}$ {\small$[-0.928, -0.924]$} & ${<}0.0001$ & $-42.0$ \\
\bottomrule
\end{tabular}
\end{table}

Q-Learning converges to the DEA at $-0.926$ (normalized: $0.27$; $d = -42.0$) within a mean of four episodes. CFR drops to $-0.22$, bounded by its static opponent.

\begin{figure}[ht!]
\centering
\includegraphics[width=0.85\linewidth]{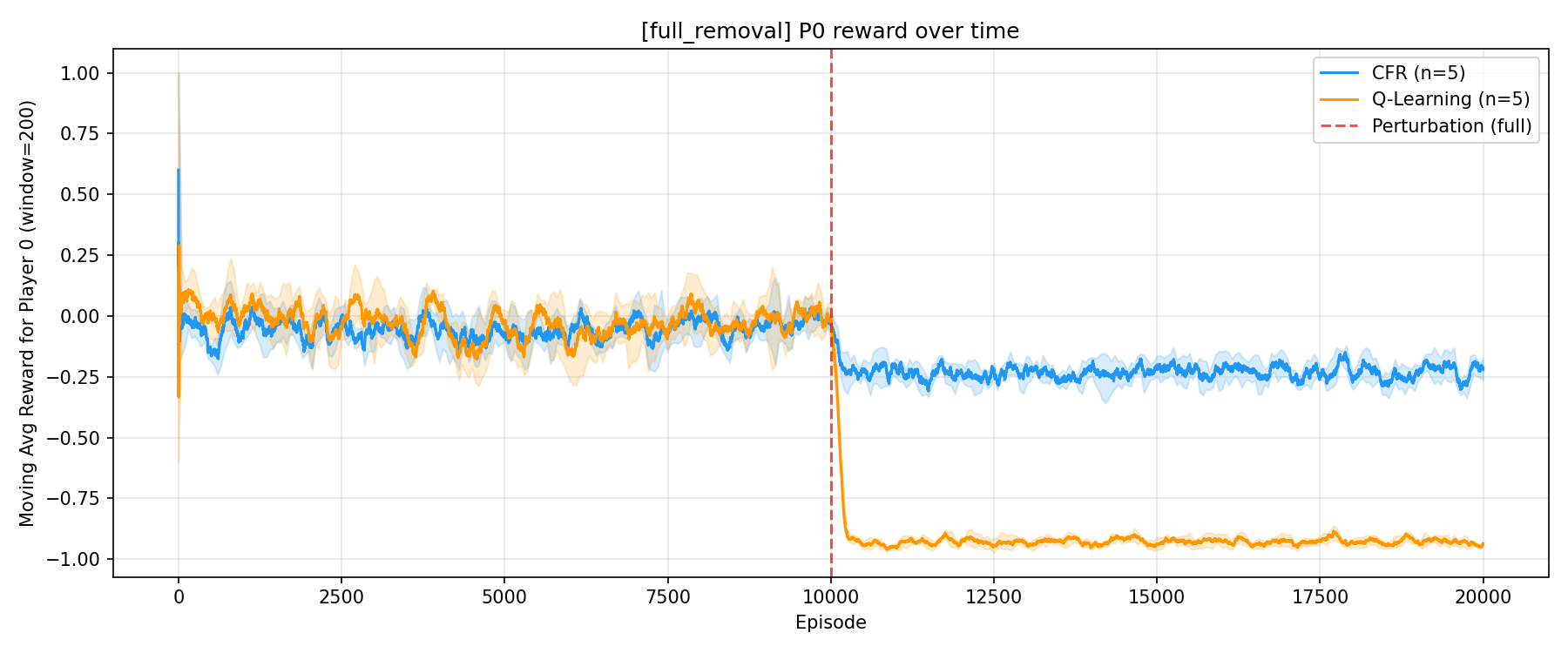}
\caption{Zero contingency (Kuhn): Q-Learning converges to the DEA within four episodes, illustrating the transition to deterministic exploitation.}
\label{fig:kuhn-full}
\end{figure}

\paragraph{Residual contingency (CAC $= 1$).}
Bet removed at root only; P0 retains call/fold. Q-Learning drops modestly ($\Delta = -0.024$, $d = -0.8$) then stabilises near Nash (Table~\ref{tab:kuhn-root}).

\begin{table}[ht!]
\centering
\caption{Kuhn, residual contingency (CAC $= 1$). 20 seeds.}
\label{tab:kuhn-root}
\begin{tabular}{lcccc}
\toprule
Agent & Pre & Post & $p$ & $d$ \\
\midrule
CFR        & $-0.060$ & $-0.053$ & $0.20$ & $+0.3$ \\
Q-Learning & $-0.041$ & $-0.065$ & $0.002$ & $-0.8$ \\
\bottomrule
\end{tabular}
\end{table}

\FloatBarrier
\subsection{The Threshold}
\label{sec:threshold}

\begin{table}[ht!]
\centering
\caption{CAC sweep (Kuhn): the discontinuity at CAC $= 0 \to 1$.}
\label{tab:capacity}
\begin{tabular}{clccc}
\toprule
CAC & Description & CFR & QL & QL Norm. \\
\midrule
0 & Zero-contingency          & $-0.221$ & $\mathbf{-0.926}$ & $0.27$ \\
1 & Residual contingency      & $-0.053$ & $-0.065$ & $0.48$ \\
2 & Full (control)            & $-0.053$ & $-0.034$ & $0.49$ \\
\bottomrule
\end{tabular}
\end{table}

The jump from CAC $= 0$ to $1$ is $\Delta = 0.86$ ($+0.21$ normalized). The jump from 1 to 2 is marginal ($\Delta = 0.03$). Figure~\ref{fig:capacity-threshold} shows the discontinuity on a normalized scale.

\begin{figure}[ht!]
\centering
\includegraphics[width=0.65\linewidth]{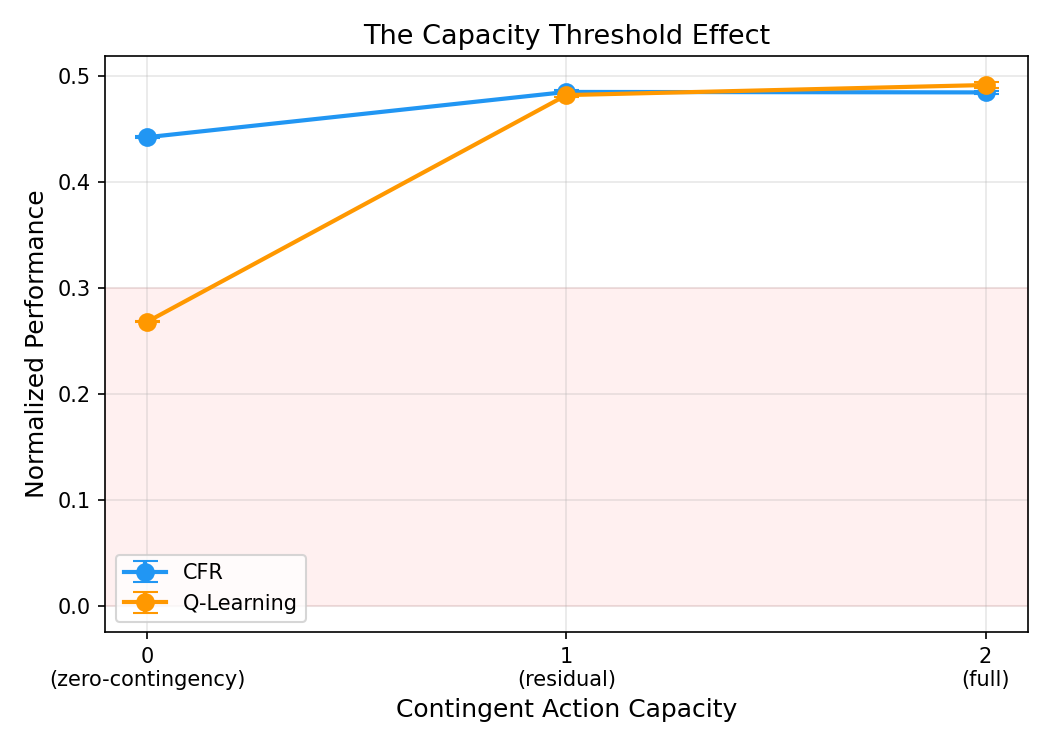}
\caption{The CAC threshold (normalized). The discontinuity at 0$\to$1 confirms a structural threshold.}
\label{fig:capacity-threshold}
\end{figure}

\FloatBarrier
\subsection{The Mechanism}
\label{sec:mechanism}

\paragraph{Frozen baseline.}
QL-Frozen avoids the DEA ($-0.141$ vs $-0.927$; $\Delta = -0.787$, $p = 0.004$, $d = -2.6$; 5 seeds). Co-adaptation---not the constraint---drives collapse.

\paragraph{Fixed opponent.}
Against a static Nash opponent (5 seeds), Q-Learning drops only to $-0.228$ (identical to CFR). Under self-play the same condition produces $-0.926$ (20 seeds). Co-adaptation is necessary for catastrophic collapse.

\paragraph{Population-based training (PSRO).}
Under PSRO with a 5-policy population, P0's post-perturbation reward improves to $-0.418 \pm 0.062$ (3 seeds)---substantially better than self-play ($-0.927$) but still degraded from pre-perturbation ($\approx -0.05$). The diverse population prevents the opponent from fully specialising against P0's forced policy, but cannot recover the strategic loss from zero contingency. PSRO \emph{mitigates} but does not \emph{eliminate} collapse: the structural vulnerability remains, but its exploitation is bounded by population diversity. With larger populations, P0's post-perturbation reward improves monotonically ($-0.275$ at pop.\ 3; $-0.598$ at pop.\ 15; all vs.\ $-0.927$ under self-play). Population diversity bounds the degree of exploitation but cannot reintroduce eliminated strategic dimensions: the structural vulnerability at CAC $= 0$ persists regardless of opponent diversity, though its severity is modulated.

\begin{figure}[ht!]
\centering
\includegraphics[width=0.85\linewidth]{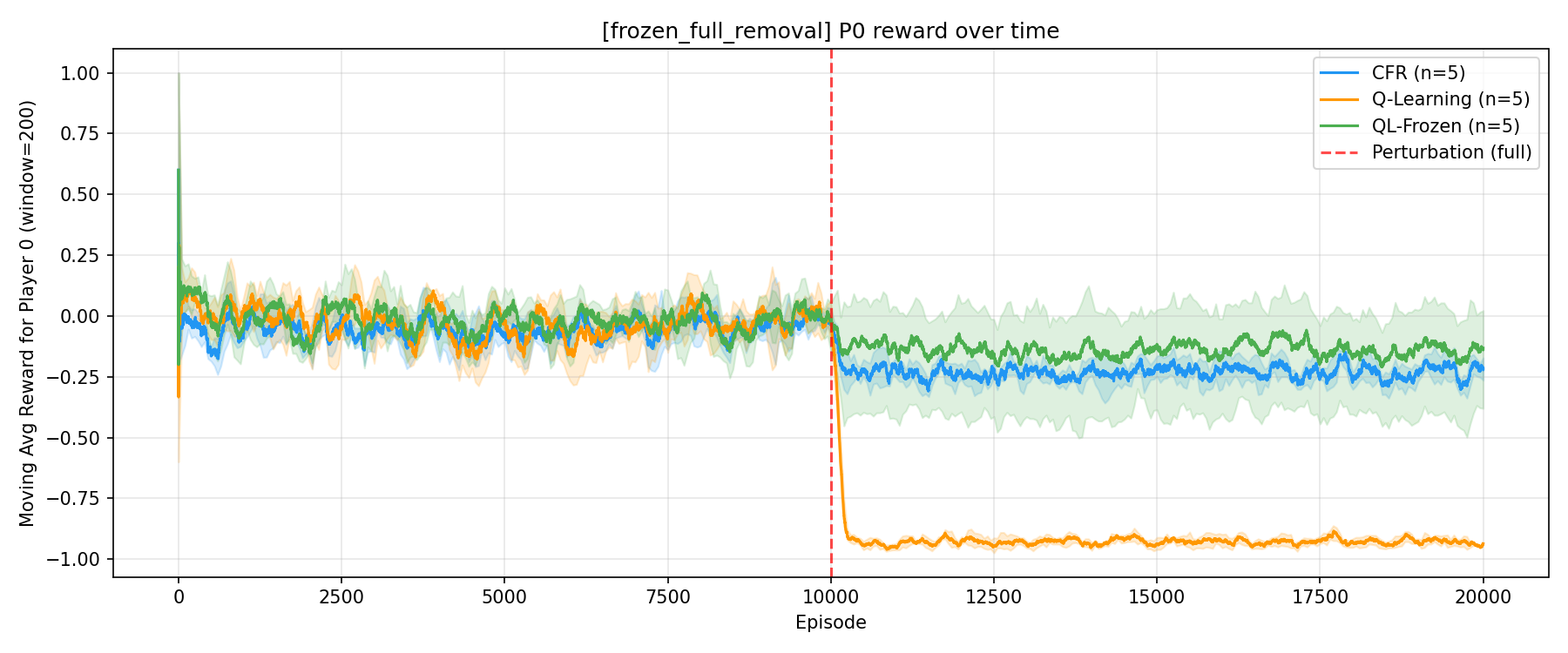}
\caption{QL-Frozen avoids the DEA, isolating co-adaptation as the mechanism.}
\label{fig:frozen}
\end{figure}

\FloatBarrier
\subsection{Generalisation}
\label{sec:generalisation}

\paragraph{Algorithm invariance.}

\begin{table}[ht!]
\centering
\caption{Algorithm invariance under zero contingency (Kuhn, 5 seeds).}
\label{tab:algorithms}
\begin{tabular}{lccc}
\toprule
Algorithm & Type & Post & $d$ \\
\midrule
Q-Learning & Tabular & $\mathbf{-0.927}$ & $-66.1$ \\
SARSA      & Tabular & $\mathbf{-0.927}$ & $-66.1$ \\
REINFORCE  & Tabular & $\mathbf{-0.500}$ & $-39.6$ \\
DQN        & Neural  & $\mathbf{-0.994}$ & $-24.4$ \\
PPO        & Tabular & $\mathbf{-0.500}$ & $-39.6$ \\
NFSP       & Tabular & $\mathbf{-0.505}$ & -- \\
\bottomrule
\end{tabular}
\end{table}

All algorithms collapse. DQN reaches $-0.994$---\emph{more} severe than tabular agents. PPO and REINFORCE both reach $-0.500$, partially protected by their softmax policy parameterisation. Notably, NFSP---which maintains an explicit average strategy via supervised learning, designed specifically for imperfect-information games---also collapses to $-0.505$. This demonstrates that structural constraints override regret-minimisation guarantees: NFSP's best-response component still learns to exploit the forced opponent, and the average strategy cannot compensate when the action space itself has collapsed, as there is no historical action diversity to average over when P0's every decision is forced. To isolate the $\varepsilon$-schedule effect, we ran DQN with fixed $\varepsilon = 0.15$ (no decay): post-perturbation reward improved modestly to $-0.923$, confirming that the deeper collapse under decaying $\varepsilon$ is driven by reduced exploration ($\varepsilon \to 0.01$), but the structural vulnerability persists regardless. Figure~\ref{fig:dqn-analysis} provides mechanistic insight: after perturbation, DQN's policy entropy drops to near zero and Q-value gaps spike, confirming rapid convergence to a deterministic policy.

\begin{figure}[ht!]
\centering
\includegraphics[width=0.85\linewidth]{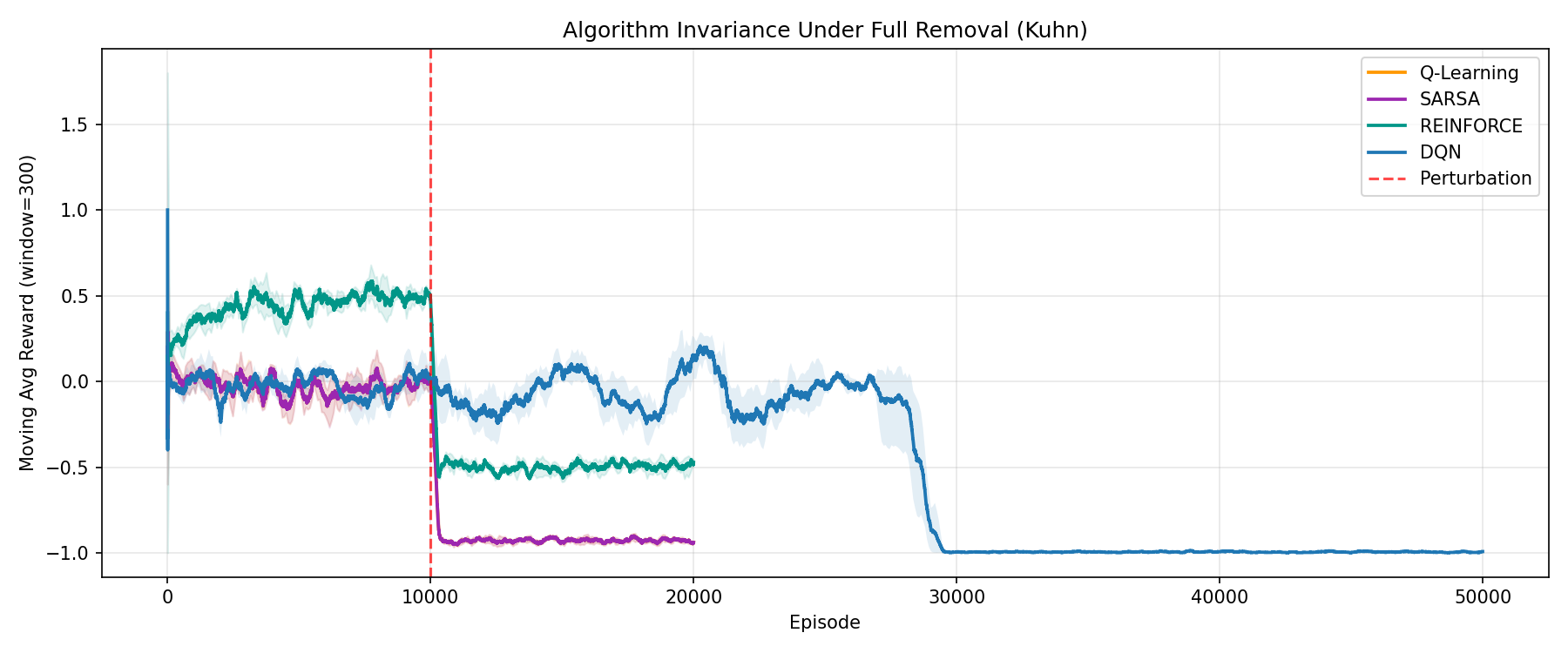}
\caption{All four algorithms converge toward the DEA. DQN collapses deepest.}
\label{fig:algo-overlay}
\end{figure}

\begin{figure}[ht!]
\centering
\includegraphics[width=0.95\linewidth]{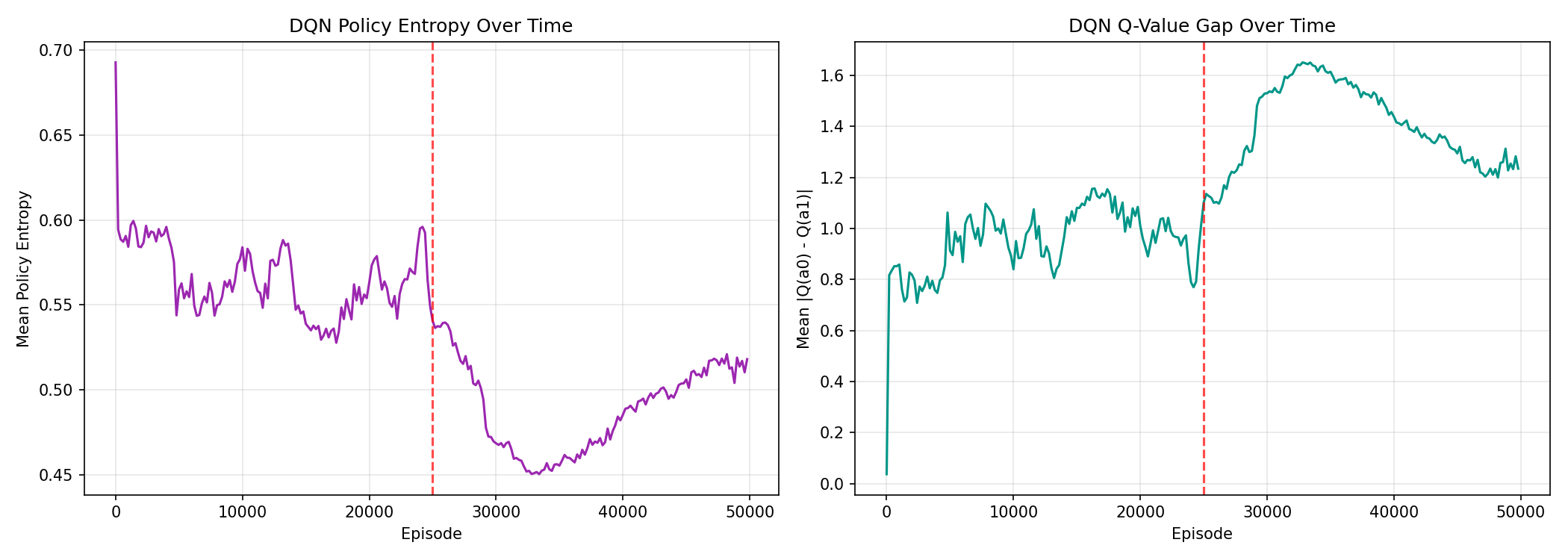}
\caption{DQN neural analysis under zero contingency (Kuhn). Left: policy entropy drops to near zero post-perturbation, confirming deterministic policy convergence. Right: Q-value gap ($|Q(a_0) - Q(a_1)|$) spikes, reflecting the network's growing certainty in the forced-fold action.}
\label{fig:dqn-analysis}
\end{figure}

\paragraph{Cross-game replication.}

\begin{table}[ht!]
\centering
\caption{Cross-game normalized collapse severity.}
\label{tab:cross-game}
\begin{tabular}{lccl}
\toprule
Game & QL Post & Info sets & Normalized \\
\midrule
Matching Pennies & $-0.851$ & 1    & $0.07$ \\
Kuhn Poker       & $-0.926$ & 12   & $0.27$ \\
Leduc Poker      & $-0.252$ & 288  & $0.49$ \\
Leduc-4 Poker    & $-0.185$ & $504$ & $0.49$ \\
Liar's Dice (1d) & $-0.032$ & $24{,}576$ & $0.48^\dagger$ \\
Liar's Dice (1d, true zero) & $\mathbf{-0.524}$ & $24{,}576$ & $0.24$ \\
Coordination     & $+1.44^{\ddagger}$ & -- & -- \\
Liar's Dice (2d) & $+0.008$ & $200{,}000{+}$ & $0.50^\dagger$ \\
\bottomrule
\end{tabular}

{\small $^\dagger$Challenge-only retains strategic value (\S\ref{sec:boundaries}). $^\ddagger$Cooperative game; forced single action reduces coordination but reward remains positive ($p\!=\!0.001$).}
\end{table}

Collapse holds across poker and matrix games (1--504 info sets). Liar's Dice at both scales (24{,}576 and $200{,}000{+}$ info sets) shows no collapse because challenge-only play retains strategic flexibility---a boundary condition consistent with Proposition~\ref{prop:residual}. Severity scales inversely with residual action options: Matching Pennies (most severe, 0.07) has zero residual options; Leduc variants (least severe, 0.49) retain fold/check-call. Under residual contingency, no game collapses.

\FloatBarrier
\subsection{Boundary Conditions}
\label{sec:boundaries}

\paragraph{IPD: when perturbation aligns with equilibrium.}
Removing ``cooperate'' in the Iterated Prisoner's Dilemma produces no collapse (post $= +1.12$). IPD Nash is always-defect; removing cooperate pushes P0 \emph{toward} equilibrium. The threshold operates only when the perturbation forces the agent into a dominated regime where the opponent can extract surplus.

\paragraph{Liar's Dice: strategic flexibility vs.\ action type.}
Removing all claims from P0 in Liar's Dice forces challenge-only play. Neither tabular Q-Learning (1-die; post $= -0.032$) nor DQN (2-dice; post $= +0.008$) collapses, because challenging at different points is a strategically contingent decision. However, when we force P0 to play \emph{deterministically} (always the lowest legal action, eliminating timing choices), Q-Learning collapses to $-0.524$ (normalized: $0.24$; $p < 0.0001$)---demonstrating collapse at 24{,}576 information sets. This confirms that the threshold depends on the reach-weighted CAC (Eq.~\ref{eq:cac-w}), not the action-type count: challenge-only play retains high $\text{CAC}_w$ because challenge decisions are contingent on private information, whereas deterministic-lowest play sets $\text{CAC}_w = 0$.

\paragraph{Non-zero-sum domains: degradation without collapse.}
In the cooperative Coordination game (match-the-target), forcing P0 to a single action (CAC $= 0$) degrades team performance ($+1.57 \to +1.44$, $p = 0.001$, $d = -3.8$) but does not produce convergence to the DEA. In the Negotiation game (ultimatum, 11 offer actions), forcing P0 to a single offer (CAC $= 0$) degrades outcomes; crucially, P1's policy shifts toward rejection because it can condition on P0's inability to adapt its offer, but this rejection is bounded---P1 does not converge to unconditional rejection as it would under zero-sum exploitation pressure. Retaining partial flexibility (offers 0--2, CAC $= 3$) reverses the degradation entirely. In contrast to competitive settings where zero contingency produces collapse to the DEA, cooperative and mixed-motive environments exhibit bounded degradation, suggesting that the threshold interacts with the underlying interaction structure: zero-sum dynamics amplify the constraint into catastrophic exploitation, while cooperative dynamics produce performance loss without attractor convergence.

\paragraph{Timing invariance.}
Perturbation at episodes 3k, 10k, 17k yields identical collapse ($-0.926$, $-0.927$, $-0.925$). The DEA is a structural attractor independent of training stage.

\FloatBarrier
\subsection{Dynamics}
\label{sec:dynamics}

\paragraph{Recovery.}
Restoring actions at episode 15k produces full recovery ($\Delta = +0.90$) within four episodes---symmetric with collapse speed. The DEA is a maintained attractor, not a corrupted representation. (See Figure~\ref{fig:recovery} and Table~\ref{tab:recovery}.)

\begin{table}[ht!]
\centering
\caption{Recovery (Kuhn, 5 seeds): three-phase Q-Learning reward.}
\label{tab:recovery}
\begin{tabular}{lcc}
\toprule
Phase & Mean & 95\% CI \\
\midrule
Pre (0--10k)       & $-0.035$ & $[-0.05, -0.03]$ \\
Collapsed (10--15k) & $-0.927$ & $[-0.93, -0.92]$ \\
Recovered (15--25k) & $-0.025$ & $[-0.06, -0.01]$ \\
\bottomrule
\end{tabular}
\end{table}

\begin{figure}[ht!]
\centering
\includegraphics[width=0.85\linewidth]{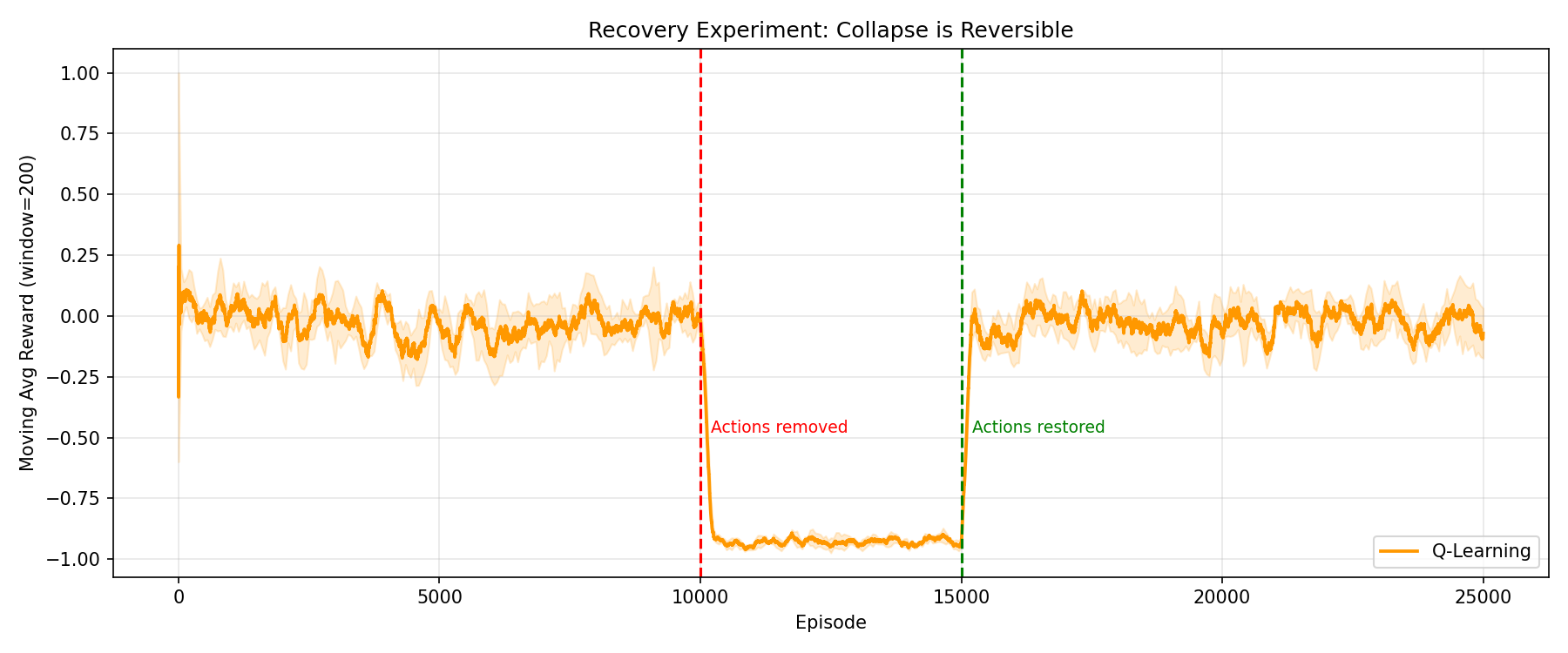}
\caption{Collapse and recovery are symmetric. The DEA is maintained only by the constraint.}
\label{fig:recovery}
\end{figure}

\paragraph{Exploitability trajectory.}
Figure~\ref{fig:exploit} plots exact exploitability (computed via best-response tree walk) over training in Kuhn. Pre-perturbation, exploitability converges toward zero as the agent approaches Nash. Post-perturbation, exploitability spikes. For Leduc Poker, exact exploitability (120 deals, full tree walk) increases from $1.63$ pre-perturbation to $1.93$ post-perturbation ($\Delta = +0.30$), corroborating the reward-based findings with a game-theoretic metric. Reward degradation aligns with increased exploitability where computable (Kuhn, Leduc), supporting generalisation of the DEA characterisation to games where only reward is available.

\begin{figure}[ht!]
\centering
\includegraphics[width=0.85\linewidth]{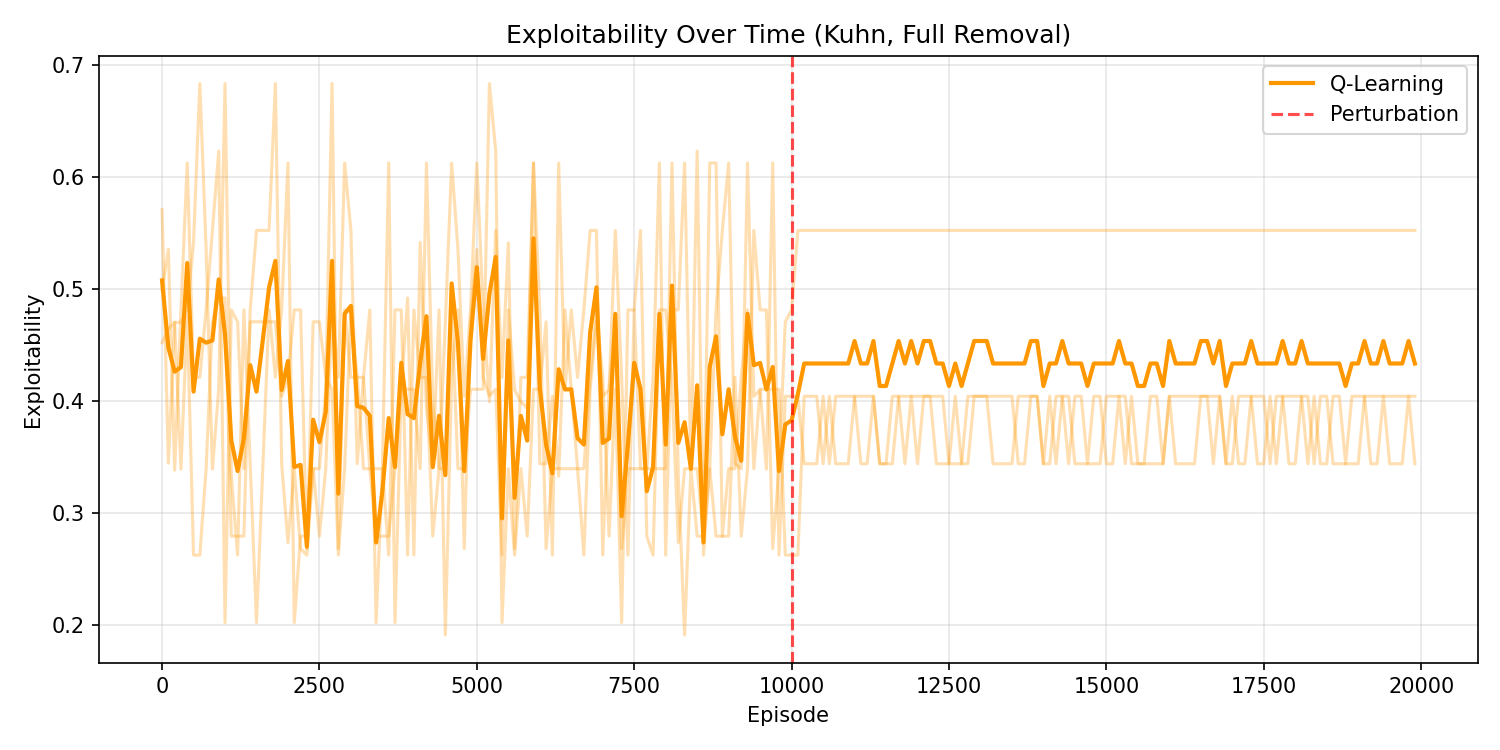}
\caption{Exploitability over time (Kuhn, full removal). Post-perturbation exploitability spikes as the agent's policy becomes deterministically exploitable.}
\label{fig:exploit}
\end{figure}

\paragraph{Variance decomposition.}
Post-collapse reward variance is $5 \times 10^{-6}$, confirming the DEA is a deterministic fixed point.

\FloatBarrier
\section{Conclusion}

We have identified a pronounced threshold in contingent action capacity that governs whether self-play RL agents collapse under asymmetric action-space perturbations. The zero-capacity reduced game is, by construction, a best-response problem against a forced policy; the nontrivial result is that self-play dynamics switch abruptly between a deterministic exploitation attractor at $\text{CAC}_w=0$ and near-Nash stability once positive-reach contingency remains. The severity of the transition depends on the reach-weighted CAC of the retained decision points (Proposition~\ref{prop:residual}): higher reach yields stronger protection. The finding is, within the tested class of discrete, imperfect-information and matrix games, invariant across all tested algorithms (Q-Learning, SARSA, REINFORCE, PPO, NFSP, DQN), co-adaptation-driven, timing-invariant, and fully reversible. A single retained decision point prevents catastrophic exploitation by maintaining strategic coupling between players.

\paragraph{Limitations.}
Games studied range from 1 to $200{,}000{+}$ information sets, with confirmed collapse at 24{,}576 info sets under true zero contingency (Liar's Dice). Experiments report unweighted CAC as a readable proxy for $\text{CAC}_w$; larger games may require direct reach-weighted reporting. Reversibility may not hold under deeper networks where gradient-based collapse could corrupt representations.

\paragraph{Future work.}
Scaling to larger games; cooperative and general-sum settings; formal exploitability bounds beyond Kuhn (approximate BR for Leduc/Liar's Dice); testing regret-minimisation self-play (NFSP) as a dynamic opponent; formalising degradation attractors in non-zero-sum settings.

\bibliographystyle{plainnat}
\bibliography{references}

\clearpage
\appendix

\section{Hyperparameter Sensitivity}
\label{app:hyperparam}
Collapse persists across all 9 tested hyperparameter combinations under zero contingency in Kuhn Poker. Learning rate ($\alpha$) has negligible effect; exploration rate ($\varepsilon$) shifts only the floor (lower $\varepsilon$ = deeper collapse, as expected from reduced residual stochasticity).

\begin{center}
\begin{tabular}{lccc}
\toprule
 & $\alpha = 0.01$ & $\alpha = 0.1$ & $\alpha = 0.3$ \\
\midrule
$\varepsilon = 0.05$ & $-0.974$ & $-0.976$ & $-0.976$ \\
$\varepsilon = 0.15$ & $-0.924$ & $-0.927$ & $-0.927$ \\
$\varepsilon = 0.30$ & $-0.849$ & $-0.853$ & $-0.851$ \\
\bottomrule
\end{tabular}
\end{center}

The DEA is robust to hyperparameter choice. The only variation is the $\varepsilon$-floor effect predicted by Proposition~\ref{prop:zero}.

\section{Separate Self-Play Verification}
\label{app:separate}
Experiments with independent P0/P1 Q-tables produce identical collapse dynamics (separate: $-0.926 \pm 0.003$; shared: $-0.927$; difference: $+0.0008$), confirming that shared-table self-play is equivalent in our zero-sum setting.

\section{Tabular PPO Details}
\label{app:ppo}
Our ``tabular PPO'' maintains a softmax policy over a preference table $\theta(s, a)$ indexed by information state. Action probabilities: $\pi(a|s) = \text{softmax}(\theta(s, \cdot))$. Updates use the clipped surrogate objective $L = \min(r_t A_t, \text{clip}(r_t, 1-\epsilon, 1+\epsilon) A_t)$ where $r_t = \pi_{\text{new}} / \pi_{\text{old}}$, $A_t = R - b$ (advantage with running baseline), and $\epsilon = 0.2$. An entropy bonus $-0.01 \cdot H(\pi)$ is added. Learning rate: $0.01$. The softmax parameterisation explains the partial protection: unlike $\varepsilon$-greedy, softmax policies cannot fully collapse to deterministic play under finite $\theta$ updates, bounding post-perturbation loss at $\approx -0.50$ rather than $-0.93$.

\section{Fixed-Opponent Calibration}
\label{app:fixed-cal}
Under forced check-fold (Kuhn, CAC $= 0$), $P_1$'s best response is to always bet: $P_0$ checks, $P_1$ bets, $P_0$ folds, yielding $+1$ for $P_1$ ($-1$ for $P_0$) on every hand. The $\varepsilon$-floor prevents exact convergence, so the theoretical DEA bound is $v_0 \approx -1 + \varepsilon_{\text{floor}}$---consistent with the observed self-play collapse at $-0.926$.

Under a \emph{fixed Nash opponent} (which does \emph{not} best-respond to $\sigma_0^f$), $P_0$ drops only to $-0.228$. This is far from the BR value of $-1$ because the Nash opponent was trained before perturbation and does not adapt. The gap between $-0.228$ (fixed Nash) and $-0.926$ (self-play) directly quantifies the contribution of co-adaptation: without an adapting opponent, $P_0$ loses only what the Nash strategy extracts from forced check-fold; with co-adaptation, $P_1$ converges toward the BR and extracts nearly the maximum.

\section{Additional Experiments}
\label{app:algorithms}

\paragraph{Severity sweep.}
Perturbation timing ($\times$ severity) under Kuhn Poker. Collapse severity is invariant to timing.

\begin{center}
\begin{tabular}{llcc}
\toprule
Timing & Episode & Severe (CAC 0) & Mild (CAC 1) \\
\midrule
Early & 3{,}000  & $-0.926$ & $-0.063$ \\
Mid   & 10{,}000 & $-0.927$ & $-0.073$ \\
Late  & 17{,}000 & $-0.925$ & $-0.061$ \\
\bottomrule
\end{tabular}
\end{center}

\paragraph{Stochastic masking.}
Root-only perturbation applied with 50\% probability per episode (Kuhn): QL post $= -0.049$ ($p = 0.06$). Intermittent perturbation does not trigger collapse.

\paragraph{Variance decomposition.}
Post-collapse Q-Learning reward variance: $V_{\text{total}} = 5 \times 10^{-6}$, $V_{\text{env}} = 7 \times 10^{-6}$, $V_{\text{policy}} = 6 \times 10^{-6}$. Near-zero total variance confirms the DEA is a deterministic fixed point.

\section{Reach-Weighted CAC Values}
\label{app:cac-w}

\begin{center}
\begin{tabular}{lccc}
\toprule
Perturbation & CAC & $\text{CAC}_w$ & Collapse? \\
\midrule
Full removal (Kuhn) & 0 & 0.000 & Yes ($-0.926$) \\
Root-only (Kuhn) & 1 & 0.473 & No ($-0.065$) \\
No perturbation & 2 & 1.473 & No ($-0.034$) \\
\bottomrule
\end{tabular}
\end{center}

The ``pb'' node has average reach $\rho = 0.473$ under Nash play (P0 passes at root, P1 bets). This positive reach confirms Proposition~\ref{prop:residual}: the retained node has sufficient strategic weight to prevent collapse. The improvement $\delta(h^*) \propto \rho(h^*)$ is directly observable in the data.

\section{DQN Hyperparameters and Normalization}
\label{app:dqn-details}

\begin{center}
\begin{tabular}{ll}
\toprule
Parameter & Value \\
\midrule
Network & 2-layer MLP, 64 hidden units, ReLU \\
Optimizer & Adam, lr $= 10^{-3}$ \\
Replay buffer & 10{,}000 transitions \\
Batch size & 32 \\
Target network update & Every 500 episodes \\
$\varepsilon$ schedule & $0.15 \to 0.01$ linear over 50{,}000 steps \\
\bottomrule
\end{tabular}
\end{center}

\paragraph{Normalization bounds per game.}
\begin{center}
\begin{tabular}{lccc}
\toprule
Game & $r_{\min}$ & $r_{\max}$ & Nash $v_0$ \\
\midrule
Kuhn Poker & $-2$ & $+2$ & $-0.056$ \\
Leduc Poker & $-13$ & $+13$ & $-0.087$ \\
Leduc-4 & $-13$ & $+13$ & $-0.096$ \\
Matching Pennies & $-1$ & $+1$ & $0$ \\
Liar's Dice (1d) & $-1$ & $+1$ & $-0.076$ \\
Liar's Dice (2d) & $-1$ & $+1$ & $\approx 0$ \\
\bottomrule
\end{tabular}
\end{center}

\paragraph{Exploitability computation.} Exact exploitability is computed for Kuhn Poker via full best-response tree walk over all 6 deals: $\text{Exploit}(\pi) = (\text{BR}_{P_0}(\pi) + \text{BR}_{P_1}(\pi)) / 2$, where $\text{BR}_{P_i}$ is the best-response value for player $i$ against $\pi$ playing as the opponent. This is exact (no approximation) for Kuhn's 12 information sets.

\section{Entropy Regularisation}
\label{app:entropy}
We test whether entropy regularisation ($Q(s,a) \leftarrow Q(s,a) + \alpha(r + \tau H(\pi_s) - Q(s,a))$) can prevent collapse. Under zero contingency in Kuhn:

\begin{center}
\begin{tabular}{lc}
\toprule
$\tau$ & QL Post \\
\midrule
$0.00$ (baseline) & $-0.927$ \\
$0.05$ & $-0.925$ \\
$0.10$ & $-0.923$ \\
$0.20$ & $-0.926$ \\
\bottomrule
\end{tabular}
\end{center}

Entropy regularisation has no measurable effect on collapse severity. The DEA is robust to diversity-encouraging bonuses because the zero-contingency constraint eliminates the agent's ability to act on any maintained diversity. The structural nature of the threshold---not insufficient exploration---is what drives collapse.

\section{Reach Sensitivity}
\label{app:reach}

We vary $\varepsilon$ (exploration rate) to change the empirical reach of the retained ``pb'' node under root-only perturbation in Kuhn:

\begin{center}
\begin{tabular}{lcc}
\toprule
$\varepsilon$ & reach(pb) & QL Post \\
\midrule
$0.05$ & $0.464$ & $-0.065$ \\
$0.15$ & $0.469$ & $-0.049$ \\
$0.30$ & $0.486$ & $-0.088$ \\
$0.50$ & $0.517$ & $-0.092$ \\
\bottomrule
\end{tabular}
\end{center}

Post-perturbation reward remains stable near $-0.06$ to $-0.09$ across all reach values, confirming that the retained node provides effective protection once $\text{CAC}_w > 0$, with only modest sensitivity to the exact reach probability.

\end{document}